
\documentclass[letterpaper, 10 pt, conference]{ieeeconf}  

\IEEEoverridecommandlockouts                              

\overrideIEEEmargins                                      



\usepackage{times}
\usepackage{graphicx} 
\usepackage{algorithm}
\usepackage{algorithmic}
\usepackage{hyperref}
\usepackage{cite}
\usepackage{url}
\usepackage{scalefnt}
\usepackage[normalem]{ulem}
\usepackage{bm}
\usepackage{caption}
\usepackage{subcaption}
\newtheorem{theorem}{Theorem}

\usepackage[table]{xcolor}

\title{\LARGE \bf Cross-Domain Transfer in Reinforcement Learning using Target Apprentice}

\author{Girish V Joshi and Girish Chowdhary
	\thanks{*This work was supported by AFOSR FA9550-15-1-0146}
	\thanks{To appear as conference paper in ICRA 2018} \thanks{Authors are with the Department of Agriculture and Biological Engineering and Coordinated Science Lab, University of Illinois,
		Urbana-Champaign, IL, USA
		{\tt\small girishj2@illinois.edu,girishc@illinois.edu}}%
}

\begin{document}
	
\maketitle


\begin{abstract}
In this paper, we present a new approach to Transfer Learning (TL) in Reinforcement Learning (RL) for cross-domain tasks. Many of the available techniques approach the transfer architecture as a method of speeding up the learning target task. We propose to adapt and reuse the mapped source task optimal-policy directly in related domains. We show the optimal policy from a related source task can be near optimal in target domain provided an adaptive policy accounts for the model error between target and source. The main benefit of this policy augmentation is generalizing policies across multiple related domains without having to re-learn in the new tasks. Our results show that this architecture leads to better sample efficiency in the transfer, reducing sample complexity of target task learning to target apprentice learning. 

\end{abstract}


\section{Introduction}
\label{introduction}
Reinforcement Learning is a machine learning paradigm, where a robotic agent learns the optimal policy for performing a sequential decision making without complete knowledge of the environment. Recent successes in deep reinforcement learning have enabled RL agents to solve complex problems from balancing inverted pendulums \cite{sutton1998reinforcement} to playing Atari games \cite{mnih2015human}. Despite these recent successes, we do not yet understand how to efficiently transfer the learned policies from one task to another \cite{taylor2009transfer}. In particular, while some level of success has been achieved, in transferring RL policies in the same state-space domains, the problem of efficient cross-domain skill transfer is still quite open.

 We consider the term ``similar'' source and target tasks, in the sense that they exploit the same underlying physical principles, but their state spaces can be entirely different. For example in our primary results, we consider the problem of knowledge transfer from balancing Cart-Pole to Bicycle balancing. While both the system share common dimensionality of state and action spaces but span across a different coordinate frame. The Cart-pole is defined over states of cart and pendulum, $(x,\dot x, \theta, \dot \theta)$ and the action space is lateral force on cart $(-F,0,F)$. Whereas bicycle dynamics is modelled over handlebar rotation and bicycle roll angle $(\theta, \dot \theta,\omega,\dot \omega)$ with action space is torque applied by rider on the handlebar $(-\tau, 0, \tau)$. While the dynamics and domain of state space of two processes might be completely different, they share a commonality in the underlying physical principles. Both the systems exhibit non-minimum phase dynamics and also the nature of the control policy is same, such that the control action is applied, in the direction of the fall of pendulum or bicycle. This similarity in dynamical behavior of two systems makes learning in a cart-pole domain relevant in bike balancing problem.
 
Another cross-domain transfer results we present, is the problem of transferring the skills learned from the Mountain Car (Figure \ref{fig:MC})  \cite{sutton1998reinforcement} to the Inverted pendulum (Figure \ref{fig:IP}). In the Mountain Car, the agent learns the optimal policy to make an underpowered car climb a mountain. On the other hand, in the pendulum domain, the agent learns to balance a pendulum upright from its initial down position. In both cases, the common physical principle the agent must learn, is to exploit the principle of energy exchange. The pendulum must be swung up to the upright position by creating enough angular momentum through smaller oscillations, and similarly, the car must be made to climb a steeper slope by using energy exchanged by moving up and down the slope of the mountain. In principle, a good RL agent would find it easier to balance a pendulum after it has learned a related task, to make an underpowered car climb a mountain. 
 
Humans are capable of efficiently and quickly generalizing the learned skills between such related tasks. However, RL algorithms capable of performing efficient transfer of policies without learning in the new domain have not yet been reported.
\begin{figure}[tbh]
    \centering
    \begin{subfigure}{0.3\columnwidth}
        \includegraphics[width=\textwidth]{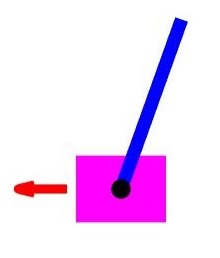}
        \vspace{-0.2in}
        \caption{}
        \label{fig:MC}
    \end{subfigure}
    \begin{subfigure}{0.1\columnwidth}
        \includegraphics[width=\textwidth]{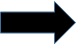}
        \vspace{-0.2in}
    \end{subfigure}
    \begin{subfigure}{0.5\columnwidth}
        \includegraphics[width= \textwidth]{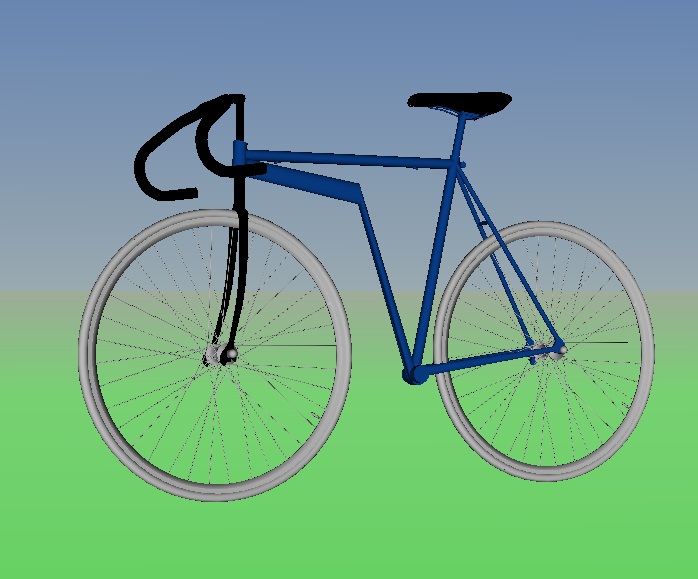}
        \vspace{-0.2in}
        \caption{}
        \label{fig:IP}
    \end{subfigure}
    \vspace{-0.12in}
    \caption{Cross Domain Transfer: (a) Source Task: Cart-Pole and (b) Target Task:Bicycle, VRML in MATLAB\textregistered  is used as simulation environment \cite{bicycle_sim} to demonstrate cross domain transfer.} 
    \label{fig:Environment}
\end{figure}
To address this gap, our main contribution is an algorithm that enables cross-domain transfer in the RL setting. Leveraging notions from apprenticeship learning \cite{abbeel2005exploration} and adaptive control \cite{aastrom2013adaptive,chowdhary2013rapid}, we propose an algorithm that can directly transfer the learned policy from a source to target task.

Given a source task and it's optimal policy, a target apprentice model and an inter-task mapping, we show that it suffices only to execute greedy policies augmented with an adaptive policy to ensure $\epsilon-$optimal behavior in target space.

\subsection{State of the Art: Transfer Learning in RL} 
A significant body of literature in transfer learning in the RL, are focused on using the learned source policy as an initial policy in the target task \cite{taylor2005value,ammar2012reinforcement,ammar2015unsupervised}. Examples include a transfer in scenarios where the source and target task are similar, and no mapping of state space is needed \cite{banerjee2007general}, or transfer from human demonstrations \cite{peters2006policy}. However, when the source and target task have different state-action spaces, the policy from source cannot be directly used in the target task. In this case, a mapping is required between the state-action space of the corresponding source and target tasks to enable knowledge transfer \cite{taylor2009transfer}. The inter-task mapping can be supervised; provided by an agent \cite{torrey2007relational}, hand coded  using semantics of state features \cite{liu2006value,banerjee2007general,konidaris2007building} ,unsupervised using Manifold Alignment \cite{wang2009manifold,ammar2015unsupervised} or Sparse Coding Algorithm \cite{ammar2012reinforcement}. Aforementioned TL methods accelerate the learning and minimize regret as compared to stand alone RL on target. However, with inter-task mapping and simple initializing of the target task learning with the transferred source policy may not lead to sample efficiency in the transfer. In particular, these approaches do not leverage the fact that both tasks exploit the same physical principle and the possibility of reusing source policy in the target domain.
\subsection{Main Contributions}
In this paper, we take a different approach from using the source policy to initialize RL in the target.  Inspired by the literature in model reference adaptive control (MRAC) \cite{chowdhary2013rapid} we propose an algorithm that adapts the source policy to the target task. Also, unlike MRAC literature we can extend the method to probabilistic MDPs with discrete state-action spaces. We argue that optimal policies retain its optimality across domains that leverage the same physical principle but different state-spaces. We augment the transferred policy by a policy adjustment term that adapts to the difference between the dynamics of two tasks in target space. If adaptive policy could be design to match the target model to projected source model, we demonstrate the adapted projected policy to be $\epsilon$-optimal in target task. The adaptive policy termed as $\pi^{(T)}_{ad}(s)$ is designed to accommodate the difference between the transition dynamics of the projected source and the target task. The key benefit of this method is that it obviates the need to learn a new policy in the target space, leading to high sample efficient transfer.  
\section{Transfer Learning with Target Apprentice (TA-TL)}
This section proposes a novel transfer learning algorithm capable of cross-domain transfer between two related but dissimilar tasks. The presented architecture applies to both continuous, discrete state and action space systems. Unlike the available state of art TL algorithms, which mainly concentrates on policy initialization in target task RL; we propose to use source policy directly as the optimal policy in the related target model. We achieve this one step transfer through online correction of transferred policy with adaptive policy, derived based on model transition error. The presented approach has three distinct phases: Phase I involves finding an optimal policy for source task. For this purpose, we use Fitted Q-Iteration (FQI) RL to solve the source MDP. Since the source task is much simpler and smaller problem compared to the target tasks, we assume we can always discover an optimal policy for source task. Phase II; involves discovering a mutual mapping between state, action space of source and target using Unsupervised Manifold Alignment (UMA). Phase III of transfer, is the adaptation of the mapped source optimal policy through policy augmentation in a new target domain.

A possible drawback of the proposed method could be suboptimal transfer, if the adaptive policy fails to account for the total model error between the projected source and target model. For given any small ``$\delta(\epsilon)$'' residual model error by adaptive policy, results in the suboptimal behavior in target task. With the advantage of high sample efficiency of the proposed technique, near-optimal behavior in the target can be acceptable. It is to be noted; we do not engage in exploration in target space for transfer, we only exploit the projected source policy in target space, to achieve near-optimal behavior. Nevertheless, with further exploration, we can improve upon the adapted transferred policy and achieve an optimal solution, but this is left for follow-on work.

Details of three phases of proposed transfer learning technique using target apprentice model are as follows:
\subsection{Phase I: Learning in the Source task} 
Fitted Q-Iteration is used to learn optimal policy in the source task. The policy search is not limited to Q-learning and can be extended to any other optimal policy generation methods. A single layer shallow network is used to approximate Q function. For the tasks considered in this paper, shallow networks were found sufficient, but for more complex tasks a deep architecture with multiple layers can be used. This exercise is underway and left for follow-on work.
\subsection{Phase II: Inter task Mapping}
\label{MA}
Transfer in RL setting, the source and target task have a different representation of state and action spaces. The cross-domain transfer requires an inter-task mapping to facilitate a meaningful transfer. State space $s^{(S)}$ and $s^{(T)}$ belonging to two different manifold, cannot be directly compared. Unsupervised Manifold Alignment (UMA) technique helps to discover alignment between two data sets and provide a one to one and onto inter-task mapping. Using this inter-task mapping, allows us to build one step optimal policy for target task. It is important to note we consider same cardinality and analogous action spaces in analysis and experiments for ease of exposition of the proposed transfer architecture. Problems with distinct, nonuniform action spaces will have to use classification methods to find correspondence between action spaces \cite{taylor2007cross,taylor2007transfer}. The transfer is achieved through augmenting transferred policy by the adaptive policy learned over the target model. The proposed policy transfer and adaptation method reuse the source policy in target space resulting in near-optimal behavior. Details of the inter-state mapping are provided in \cite{ammar2015unsupervised,wang2009manifold} and reference therein.
\begin{algorithm}
    \caption{Transfer Learning using target Apprentice model}
    \label{alg:TL}
    \begin{algorithmic}[1]
        \STATE {\bfseries Input:} Source Policy $\pi^{*(S)}(s)$, Inter-task mapping $\mathcal{\chi_S}$ and Apprentice Model $\hat \mathcal{P}^{(T)}$
        \REPEAT
        \STATE Initialize $(s_0,a_0)^{T} \in (\mathcal{S} \times \mathcal{A})^{T}$.
        
        \STATE Project the target task state to source model using inter-task mapping ${\chi_s}^+$
        $$\hat{s}^{S}_i = {\chi_s}^+(s^{T}_i)$$
        \STATE Evaluate the action using greedy policy on learned action-value function $Q^{(S)}(s,a)$ in source task,
        $$\pi^{*(S)}_i = arg\max_{\pi}(Q^{(S)}(\hat{s}^{S}_i,a))$$
        \STATE $$a^{S}_i =  \pi^{*(S)}(\hat{s}^{S}_i)$$
        \STATE Query the source task model at state $\hat{s}^{S}_i$ and action $a^{S}_i$ $$s^{S}_{i+1} = \mathcal{P}^{(S)}(\hat{s}^{S}_i,a^{S}_i)$$
        \STATE Project the source task propagated state to target task model,
        $$\hat{s}^{T}_{i+1} = {\chi_s}({s}^{S}_{i+1})$$
        \STATE Evaluate the adaptive policy as
        $$\pi^{(T)}_{ad} = \hat \mathcal{P}^{(T)}(s^{T}_i,a^T_i) - \hat{s}^{T}_{i+1}$$
        \STATE Project the source policy into target space $\mathbf{\chi}_s(\pi^{*(S)}(s,a))$
        \STATE TL policy for target task
        $$\pi^{*T} = \pi^{*(S)}(\chi_s (s^S)) - \pi^{(T)}_{ad}$$
        \STATE Draw action from policy $\pi^{*T}$ at step $i$ and propagate the target model
        \UNTIL{$s^{(T)}_i = terminal$} 
    \end{algorithmic}
\end{algorithm}
\subsection{Phase III: Transfer Learning through policy adaptation}
This section presents a transfer algorithm for a pair of tasks in continuous/discrete state and action spaces. Algorithm \ref{alg:TL} details TL through policy adaptation using apprentice model. Empirically we show presented method is sample efficient compared to other methods of transfer; since the sample complexity of learning an optimal policy for initialized target task is reduced to sample complexity of local apprentice model learning.
Algorithm-\ref{alg:TL} leverages the inter-task mapping detailed in subsection \ref{MA}, to move back and forth between source and target space for knowledge transfer and adaptive policy learning. The performance of the policy transfer depends on the quality of manifold alignment between source and target tasks. We assume UMA provides a one to one and onto correspondence between source and target state spaces for efficient transfer. Algorithm-\ref{alg:TL}, provides pseudo-code for TL using target apprentice. Steps $1-8$ provide an architecture for cross-domain policy transfer and step $9-12$ details policy adaptation through target apprentice learning.

\section{Markov Decision Process}
We assume the underlying problem is defined as Markov Decision Process (MDP). An MDP is defined as a tuple $\mathcal{M} = (\mathcal{S},\mathcal{A},\mathcal{P},\mathcal{H},\rho_0,\mathcal{R})$, where $\mathcal{S}$ is a finite set of states; $\mathcal{A}$ set of actions. $\mathcal{P} = P(s,a,s')$ is a Markovian state transition model, the probability of making transition to $s'$ upon taking action $a$ in state $s$. $\mathcal{H}$ is solution horizon of MDP, so that MDP terminates after $\mathcal{H}$ steps. $\rho_0$ is distribution over which initial states are chosen and $\mathcal{R}:\mathcal{S} \times \mathcal{A} \to {\rm I\!R}$ is reward function measuring the performance of agent and is assumed to be bounded by $R_{max}$. Total return for all states $s_i \in \rho_0$ is defined as sum of discounted reward $J = \sum_{i=t}^\mathcal{H}\gamma^{i-t}R(s_i,a_i)$, $\gamma \in [0,1)$ being the discount factor. A policy $\pi: \mathcal{S} \to \mathcal{A}$ is a mapping from states $\mathcal{S}$ to a probability distribution over set of actions in $\mathcal{A}$. The agent's goal is to find a policy $\pi^\star$ which maximize the total return.

We formalize the underlying transfer problem by considering a source and target MDP $\mathcal{M}^S = (\mathcal{S},\mathcal{A},\mathcal{P},\mathcal{H},\rho_0,\mathcal{R})^S$,  $\mathcal{M}^T = (\mathcal{S},\mathcal{A},\mathcal{P},\mathcal{H},\rho_0,\mathcal{R})^T$, with its own state space, action space and transition model respectively. In general the state space can be completely different in two domains. Regarding the action space of two domains we assume,\\
\textit{\textbf{Assumption 1:}} The cardinality of the discrete action space is same in source and target task
\begin{equation}
|\mathcal{A}^{(T)}| = |\mathcal{A}^{(S)}|
\label{action_cardinality}
\end{equation}
but the limits on action amplitude can be different
\begin{equation}
||\mathcal{A}^{(S)}||_2 \leq \tau^{(S)} ,
||\mathcal{A}^{(T)}||_2 \leq \tau^{(T)} 
\end{equation}
We assume an invertible mapping $\chi_s$ provides correspondence between the two state space of source and target model. We will use $\hat s^S_i$, $\hat s^T_i$ to denote the corresponding projected states at time $``i"$ from target and source spaces respectively.

The transition probabilities $\mathcal{P}^S,\mathcal{P}^T$ also differ. However, we assume the physics of the problem share some similarities in the underlying principles.

\textit{\textbf{Assumption 2:}} The transition model for the source task is available or that we can sample from a source model simulator. This assumption is not very restrictive since the designer can always select/create related source task for given target task. 
\begin{equation}
{s}^{S}_{i+1} = \mathcal{P}^{(S)}(\hat{s}^{S}_{i},a^{S}_i)
\end{equation}
The target transition probabilities $\mathcal{P}^{(T)}$ is modeled online using state-action-state $(s_t,a_t,s_{t+1})$ triplets collected along the trajectories generated by some random exploration policy. We call the approximate model as the apprentice to target $\hat \mathcal{P}^{(T)}$.
\subsection{Algorithm: TA-TL}
For every initial condition in target task $s^{T}_0 \in \mathcal{S}^{(T)}$; $s^{T}_0$ are mapped to source space to find the corresponding initial condition of source task. 
\begin{equation}
\hat{s}^{S}_i = {\chi_s}^+(s^{T}_i)
\end{equation}     
where ${\chi_s}^+$ is the inverse mapping from target to source and $\hat{s}^{S}_{i}$ represents the image of $s^{T}_i$ in source state space. For the mapped state in source task, a greedy action is selected using learned $Q^{(S)}(\hat{s}^{S}_{i},a^S_i)$ state-action value function.
\begin{equation}
a^{S}_i \leftarrow arg\max_{a^S_i}(Q^{(S)}(\hat{s}^{S}_{i},a^S_i))
\end{equation}

Using selected action $a^{S}_i$ the source model at state $\hat{s}^{(S)}_{i}$ is propagated to ${s}^{S}_{i+1}$.
The propagated state in source task is mapped back to the target space using inter-task mapping function, 
\begin{equation}
\hat{s}^{T}_{i+1} = {\chi_s}(s^{S}_{i+1})
\end{equation} 
where $\hat{s}^{T}_{i+1}$ is the image of $s^{S}_{i+1}$ in target space. From Assumption-1, every selected action in source task has greedy correspondence in target task. Using this equivalence of actions, for every selected action in source task an equivalent action in target task is selected as $a^{T}_i \in \mathcal{A}^{(T)}$. The selected action for target task is augmented with $a^{(T)}_{ad} \in \mathcal{A}^{(T)}_{ad}$ derived from adaptive policy,
\begin{eqnarray}
\pi^{(T)}_{ad} &=& {\mathcal{P}}^{(T)}_{(S)}(s^S_i,a^S_i)-\hat{\mathcal{P}}^{(T)}(s^T_i,a^T_i) \\
a^{(T)}_{ad} &=& \pi^{(T)}_{ad}(s^{T})
\label{eq:adaptive_term}
\end{eqnarray}
where $\hat{\mathcal{P}}^{(T)}(s^T_i,a^T_i)$ is apprentice model and
\begin{equation}
{\mathcal{P}}^{(T)}_{(S)}(s^S_i,a^S_i) ={\chi_s}({\mathcal{P}}^{(S)}(s^S_i,a^S_i))
\label{eq:projected_model}
\end{equation}
is the projected source model on to target space. The set 
$\mathcal{A}^{(T)}_{ad}$ is adaptive action space such that $|\mathcal{A}^{(T)}_{ad}| \geq |\mathcal{A}^{(T)}|$ and $||\mathcal{A}^{(T)}_{ad}|| \leq ||\pi^{(T)}_{ad}||_{\infty}$

The total transferred policy for solving a related target task is proposed to be a linear combination of mapped optimal policy and an adaptive policy as follows, 
\begin{equation}
\pi^{*(T)}(.) = \pi^{*(S)}({\chi_s}(.)) + \mathcal{K}\pi^{(T)}_{ad}(.)
\label{eq:modified_optimal_policy}
\end{equation} 
\subsection{Analysis}
\begin{theorem}
    For any given small  $\epsilon \geq 0$, there exists a $\delta(\epsilon)$ such that the difference between true target model and target apprentice model over the entire state-action space be
	$$\|\mathcal{P}^{(T)}(s,a) - \hat{\mathcal{P}}^{(T)}(s,a)\| \leq \delta(\epsilon),\hspace{3mm}  \forall (s,a) \in \mathcal{S} \times \mathcal{A} $$
Then using $a^{S} = \pi^{*(S)}(.)$, the optimal policy for source task,
the modified policy (\ref{eq:modified_optimal_policy}) can be shown to be $\epsilon$-optimal in the target task
\end{theorem}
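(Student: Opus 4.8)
The plan is to bound the gap in total discounted return between the optimal target policy and the transferred policy of (\ref{eq:modified_optimal_policy}), and to show this gap can be driven below $\epsilon$ by choosing $\delta(\epsilon)$ small enough. I would work with value functions $V^{\pi}_{\mathcal{P}}(s)=\sum_{i=t}^{\mathcal{H}}\gamma^{i-t}R(s_i,a_i)$ evaluated under different transition models $\mathcal{P}$, and establish $\epsilon$-optimality as $|V^{*(T)}(s)-V^{\pi^{*(T)}}(s)|\le\epsilon$ for all $s$.

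First I would use that $\chi_s$ is one-to-one and onto (the UMA premise of Phase II) so that it acts as an isomorphism between the source and target state spaces. Under this mapping the projected source policy $\pi^{*(S)}(\chi_s(\cdot))$ is optimal for the projected source transition model $\mathcal{P}^{(T)}_{(S)}$ of (\ref{eq:projected_model}) expressed in target coordinates. Invoking the shared-physical-principle premise, the optimal return of the projected source MDP coincides with that of the target MDP, so it suffices to compare the transferred policy against the optimum of the projected source model.

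Next I would analyze the adaptive term. By its definition $\pi^{(T)}_{ad}=\mathcal{P}^{(T)}_{(S)}(s^S_i,a^S_i)-\hat{\mathcal{P}}^{(T)}(s^T_i,a^T_i)$, so the augmentation $\mathcal{K}\pi^{(T)}_{ad}$ in (\ref{eq:modified_optimal_policy}) is precisely the correction that cancels the mismatch between the projected source model and the apprentice model. Consequently, if the target evolved exactly under the apprentice dynamics $\hat{\mathcal{P}}^{(T)}$, the modified policy would reproduce the one-step transitions of the optimal projected-source trajectory, and its value under $\hat{\mathcal{P}}^{(T)}$ would equal the optimal value. The only remaining discrepancy is then the residual error between the true target dynamics and the apprentice, which by hypothesis satisfies $\|\mathcal{P}^{(T)}(s,a)-\hat{\mathcal{P}}^{(T)}(s,a)\|\le\delta(\epsilon)$ over $\mathcal{S}\times\mathcal{A}$.

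To convert this per-step model error into a return error I would invoke a simulation-lemma argument in the style of Kearns and Singh: telescoping the difference $V^{\pi^{*(T)}}_{\mathcal{P}^{(T)}}-V^{\pi^{*(T)}}_{\hat{\mathcal{P}}^{(T)}}$ along the $\mathcal{H}$-step rollout and bounding each one-step deviation using $R(\cdot)\le R_{max}$ and $\gamma\in[0,1)$ yields a bound of the form $C(\gamma,R_{max},\mathcal{H})\,\delta(\epsilon)$. Setting $\delta(\epsilon)=\epsilon/C$ then gives the claimed $\epsilon$-optimality. The hard part is making this last step rigorous: propagating the one-step transition error through the value function requires a Lipschitz (or bounded-variation) property of the value function in the state, and careful control of how the adaptive action accumulates so that it remains in the admissible adaptive set $\mathcal{A}^{(T)}_{ad}$ with a consistent gain $\mathcal{K}$. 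A secondary subtlety is that the first step rests on the qualitative ``same physical principle'' assumption, so identifying the target optimum with the projected-source optimum is the place where the argument is least quantitative and would most benefit from an explicit structural condition on $\chi_s$.
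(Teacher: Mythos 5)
Your route is genuinely different from the paper's. You aim at $\epsilon$-optimality in the value sense: identify the target optimum with the projected-source optimum, argue the adaptive term makes the closed loop reproduce the projected source trajectory under the apprentice dynamics, then convert the residual per-step model error $\delta$ into a return gap via a Kearns--Singh simulation lemma with a bound $C(\gamma,R_{max},\mathcal{H})\,\delta$. The paper never performs that last conversion. Its proof is a pure adaptive-control trajectory-matching argument: it assumes the target is control affine, $s^{T}_{i+1}=\mathcal{F}^{*(T)}(s^{T}_i)+\mathcal{B}a^{T}_i$, models the apprentice as $\hat{W}\phi(s)+\hat{\mathcal{B}}a$, substitutes the modified policy, picks the mixing gain $\mathcal{K}=1/\hat{\mathcal{B}}$, and shows the closed-loop update collapses to $\chi_s\left(\mathcal{F}^{*(S)}(s^{S}_i,a^{S}_i)\right)+\epsilon$, where $\epsilon$ is the residual prediction error with $\alpha=\mathcal{B}/\hat{\mathcal{B}}$ driven to $1$ by persistent excitation. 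So your plan actually attempts strictly more than the paper establishes: the paper's ``$\epsilon$-optimal'' is really ``the target trajectory tracks the image of the source optimal trajectory within $\epsilon$,'' with the trajectory-to-value step left implicit, and your simulation-lemma step (with its Lipschitz-value requirement) is precisely what would be needed to make that step honest.

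There is, however, a concrete gap in your middle step, and it is exactly the point the paper's mechanism resolves. The adaptive policy $\pi^{(T)}_{ad}=\mathcal{P}^{(T)}_{(S)}(s^S_i,a^S_i)-\hat{\mathcal{P}}^{(T)}(s^T_i,a^T_i)$ is a difference of next-state predictions --- a state-space vector --- yet in (\ref{eq:modified_optimal_policy}) it is added to an action. The claim that it ``precisely cancels the mismatch'' has no meaning without a mechanism converting state discrepancies into action corrections. The paper supplies this through three ingredients: the control-affine structure of the target dynamics, the specific choice $\mathcal{K}=1/\hat{\mathcal{B}}$, and parameter convergence $\hat{\mathcal{B}}\to\mathcal{B}$ under persistently exciting data (so $\alpha\approx 1$); without the last, a term proportional to $(1-\alpha)\hat{\mathcal{P}}^{(T)}$ remains in the closed loop and is not controlled by $\delta$ at all. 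You flag the ``consistent gain $\mathcal{K}$'' as a subtlety but leave it unresolved; any completed argument along your lines must import these structural assumptions (or equivalents) before the cancellation step, and hence the simulation lemma, can be applied.
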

\begin{proof}
	We analyze the admissibility of the augmented policy for the target space. Target model is assumed to be any nonlinear, control affine system and source model be any nonlinear system. The discrete transition model for both source and target model can be considered as follows,
	\begin{eqnarray}
	\mathcal{P}^{(S)}(s,a):s^{S}_{i+1} &=& \mathcal{F}^{*(S)}(s^{S}_i,a^{S}_i)\\
    \label{eq:True_Source_model}
	\mathcal{P}^{(T)}(s,a):s^{T}_{i+1} &=& \mathcal{F}^{*(T)}(s^{T}_i) + \mathcal{B}a^{T}_i
    \label{eq:True_Target_model}
	\end{eqnarray} 
	where $s^{S} \in \mathcal{S}^{(S)},s^{T} \in \mathcal{S}^{(T)}$ and $a^{S} \in \mathcal{A}^{(S)},a^{T} \in \mathcal{A}^{(T)}$.

The target apprentice is an approximation to the target model. We retain the control affine property of the target model by using appropriate basis of the single layer neural network, to model the target dynamics. The approximate or the apprentice model of the target can be written as function of network weights and basis as,
\begin{eqnarray}
	\hat \mathcal{P}^{(T)}(s,a):s^{T}_{i+1} &=& \hat \mathcal{F}^{(T)}(s_i) + \hat \mathcal{B}a^{(T)}_i\\
    &=& \hat W \phi(s^{(T)}_i) + \hat \mathcal{B}a^{(T)}_i\\
    &=& \left[\hat W  \hspace{2mm} \hat \mathcal{B}\right]\times\left[\phi(s^{(T)}_i) \hspace{2mm}  a^{(T)}_i\right]^T
	\label{eq:Apprentice_model}
	\end{eqnarray}
    where $\hat \mathcal{F}^{(T)}(s_t) = \hat W^{(T)}\phi(s^{(T)}_i)$ and $\left[\hat W  \hspace{2mm} \hat \mathcal{B}\right]$, $\psi(s^{T}_i,a^{T}_i) = \left[\phi(s^{(T)}_i) \hspace{2mm}  a^{(T)}_i\right]$ be target apprentice network weights and basis function.

Sampling the action from modified target optimal policy (\ref{eq:modified_optimal_policy}) $ a^{(T)}_i = \pi^{*(T)}((s_i^T))$ and applying it to target model following holds,
	\begin{equation}
	s^{(T)}_{i+1} = \mathcal{F}^{*(T)}(s^{T}_i)  + \mathcal{B}a^{(T)}_{S,i} + \mathcal{B}\mathcal{K} a^{(T)}_{ad,i}
	\end{equation}
where $a^{(T)}_{S,i} = \pi^{*(S)}({\chi_s}(s^{T}_i))$ is the mapped optimal action to target space corresponding to source optimal policy and $a^{(T)}_{ad,i} = \pi^{(T)}_{ad}(s^{T}_i)$ is modification term to mapped optimal action to cancel the effects of model error.

From definition of model adaptive policy (\ref{eq:adaptive_term}) and apprentice model (\ref{eq:Apprentice_model}), above expression can be simplified as
	\begin{eqnarray}
	s^{(T)}_{i+1} &=& \mathcal{F}^{*(T)}(s^{T}_i) + \mathcal{B}a^{(T)}_{(S),i}\nonumber\\&& + \mathcal{B}\mathcal{K}\left( {\mathcal{P}}^{(T)}_{(S)}(s^S_i,a^S_i) - \hat{\mathcal{P}}^{(T)}(s^T_i,a^T_i)\right)\\
	s^{(T)}_{i+1} &=& \mathcal{F}^{*(T)}(s^{T}_i) + \mathcal{B}a^{(T)}_{(S),i}\nonumber\\&& + \mathcal{B}\mathcal{K}\left({\mathcal{P}}^{(T)}_{(S)}(s^S_i,a^S_i) - \hat \mathcal{F}^{(T)}(s_i) - \hat \mathcal{B}a^{(T)}_i\right)\nonumber
	\end{eqnarray}
For choice of policy mixture coefficient $\mathcal{K} = 1/\hat \mathcal{B}$.	The above expression Simplifies to,
	\begin{equation}
	s^{T}_{i+1} = \alpha{\mathcal{P}}^{(T)}_{(S)}(s^S_i,a^S_i) + \left(\mathcal{P}^{(T)}(s,a) - \alpha \hat{\mathcal{P}}^{(T)}(s,a)\right)
	\label{eq:Adapted_target_model}
	\end{equation} 
Where $\alpha = \mathcal{B}/\hat{\mathcal{B}}$, and for persistently exciting data collected for apprentice model learning $\mathcal{D} = [s_i,a_i,s_{i+1}]_{i = 1}^N$ convergence of the parameters to true value can be shown \cite{liu2013convergence}, ensuring $\alpha \approx 1$
 
Using (\ref{eq:projected_model}) and (\ref{eq:True_Source_model}) the above expression (\ref{eq:Adapted_target_model}), can be rewritten in terms of source transition model using the inter-task mapping function $\mathbf{\chi_S}$ as
	\begin{equation}
	s^{T}_{i+1} = \mathbf{\chi_s}\left(\mathcal{F}^{*(S)}(s^{S}_i,a^{S}_i)\right) + \epsilon
	\label{eq:Source_model_equation}
	\end{equation}
    where $\epsilon = \mathbf{\chi_s}\|\mathcal{P}^{(T)}(s,a) - \alpha \hat{\mathcal{P}}^{(T)}(s,a)\|$
    
	Expression (\ref{eq:Source_model_equation}) demonstrates that implementation of the modified optimal policy (\ref{eq:modified_optimal_policy})  in target task is equivalent to projecting the source optimal trajectories on to the target space. Assuming existence of unique correspondence between source to target task space, we prove the  policy (\ref{eq:modified_optimal_policy}) leads to $\epsilon$-optimal solution in target model.
\end{proof}
\section{Target Task Apprentice Learning}
Target apprentice is an approximate model for target task. In this paper we consider the target model apprenticeship learning using any random policy which explores randomly the target domain \cite{abbeel2005exploration}.
We re-use the dataset, state-action-state triplets generated through random policy for manifold alignment, for target apprentice learning. This data reuse leads to further saving of time and processing for sample generation for apprentice learning. 
\subsection{Apprentice Learning: Algorithm}
Using any random policy, $\pi$ we explore the target model to collect state-action-state triplets to learn the target apprentice to enable transfer learning using apprentice model.
\begin{enumerate}
	\item Run $k$ trials in target task under the random $\pi$ policy for $N_T$ steps. Save the state trajectories experienced. 
    \item Using accumulated data of state-action-state triplets, estimate the system dynamics using least square linear regression for linearly parametrized model and store the system parameters $\theta = [\hat W,\hat \mathcal{B}]$.
    \item Evaluate the utility of the projected policy $\hat{\pi}^{(T)} = \pi^{*(S)}(\mathbf{\chi_s}(s^{T}))$ in target model on both true and approximate system, $\mathcal{M}^{(T)}$ and $\hat{\mathcal{M}}^{(T)}$. Utility function $U_M(\pi)$ is defined as average reward accumulated for $k$ trials.
    \item If $U_M(\hat{\pi}^{(T)}) - U_{\hat M}(\hat{\pi}^{(T)}) \leq \zeta$,  return $\theta$, where $\zeta$ is some chosen small threshold.
\end{enumerate}

\section{Experiments \& Results}
We present results from five experiments to evaluate the proposed transfer learning framework. The first two experiments consider transfer learning in same domains but with different transition models and action spaces. The first problem is in discrete state and action space, while the second problem is of continuous state space and non-stationary transition model in the target task. The third and fourth experiment focuses on cross-domain transfer where the policy from the cart-pole, mountain car is transferred to the bicycle problem, inverted pendulum domains respectively. We also demonstrate the presented approach being robust to negative transfer through our final experiment. We compare the presented Target Apprentice TL (TA-TL) against existing state of the art Transfer in RL, Unsupervised Manifold Alignment (UMA-TL) \cite{ammar2015unsupervised} and no transfer RL (Fitted Q-Learning).
\begin{algorithm}
	\caption{Apprentice Learning}
	\label{alg:Apprentice}
	\begin{algorithmic}[1]
		\STATE {\bfseries Input:} A Random policy $\pi$, trials $N_T$, convergence criterion $\zeta$
		\REPEAT
		\STATE Initialize $(s_0,a_0) \in \mathcal{S} \times \mathcal{A}$.
		\FOR{$i=1$ {\bfseries to} $N_T$}
		\STATE Execute the random policy $\pi$ in target model
		\STATE $$a_i = \pi(S_i)$$
		\STATE Propagate the target model at state $s_i$ under action $a_i$ $$s_{i+1} = \mathcal{P}^{(T)}(s_i,a_i)$$
		\STATE Save state trajectories $\mathcal{D}_{N_T}(i,:)=(s_i,a_i,s_{i+1})$
		\ENDFOR
		\STATE Solve the least square linear regression problem on training data set $\mathcal{D}_{N_T}$
		$$\min_{\theta}\sum_{i = 1}^{N_T}(s_{i+1} - \theta'\psi(s_i,a_i))$$
		\STATE Evaluate the utility of policy $\pi^{(S)}(\mathbf{\chi}_s(s))$ in true model $\mathcal{M}^{(T)}$ and in approximated model $\hat{\mathcal{M}}^{(T)}$
		\UNTIL{$U_M(\pi^S) - U_{\hat M}(\pi^S) \leq \zeta$} return  $\theta$
		\end{algorithmic}
\end{algorithm}
\subsection{Same-Domain Transfer}
We learn the optimal policy in the source task using FQI. In each problem, distinction in the environment/system parameters makes the source and target tasks different. The target and source domains have the same state-space but different transition models and action spaces. We also do not need target reward model be similar to source task, as the proposed algorithm directly adapts the policy from the source task and does not engage in RL in the target domain. 

\subsubsection{Grid World to Windy Grid World}
 The source task in this experiment is Non-Windy (NW) grid world. The state variables describing the system are grid positions. The RL objective is to navigate an agent through obstacles from start to goal position optimally. The  admissible actions are up $(+1)$, down $(-1)$, right $(+1)$ and left $(-1)$. The reward function is $+10$ for reaching goal position, $-1$ for hitting obstacles and $0$ everywhere else. The target domain is same as the source but with the added wind which affects the transition model in parts of the state-space (see Figure \ref{fig:tlwithmodellearning}).
 
 The optimal policy in source task (non-windy grid world) $\pi^{*(S)}$ is learned using Q-Iteration. We do not need any inter-task mapping as the source, and target state spaces are identical. We start with $100$ randomly sampled starting position and execute policy $\pi^{*(S)}$ in the target domain and collect samples for apprentice model learning. Empirically, we show the proposed method (TA-TL) provides a sample efficient TL algorithm compared to other transfer techniques.
 Figure \ref{fig:sourcetaskrlnowind} and \ref{fig:tlwithmodellearning} shows the results of same domain transfer in the grid world, demonstrating TA-TL achieving successful transfer in navigating through the grid with obstacles and wind bias. Figure \ref{fig:avg_reward_Windy} and \ref{fig:solutiontime} shows the quality of transfer through faster convergence to average maximum reward with lesser training samples compared to UMA-TL and RL methods.  The presented algorithm can attain maximum average reward in reaching goal position in $ \sim 2 \times 10^4$ steps. UMA-TL and RL algorithm achieve similar performance in $ \sim 1.2 \times 10^5$ and $ \sim 1.7 \times 10^5$ steps respectively, nearly one order higher compared to proposed TA-TL.
 
\begin{figure*}
    \centering
    \begin{subfigure}{0.5\columnwidth}
        \includegraphics[width=0.8\textwidth]{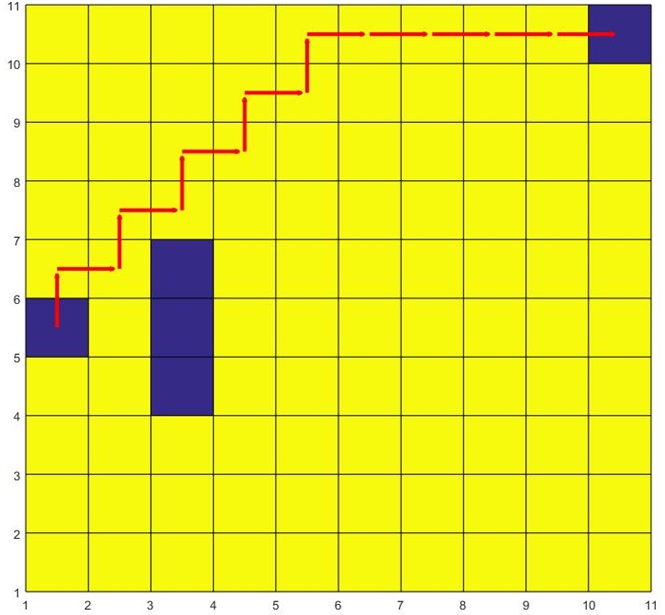}
        \caption{}
        \label{fig:sourcetaskrlnowind}
    \end{subfigure}
    \begin{subfigure}{0.5\columnwidth}
        \includegraphics[width=0.8\textwidth]{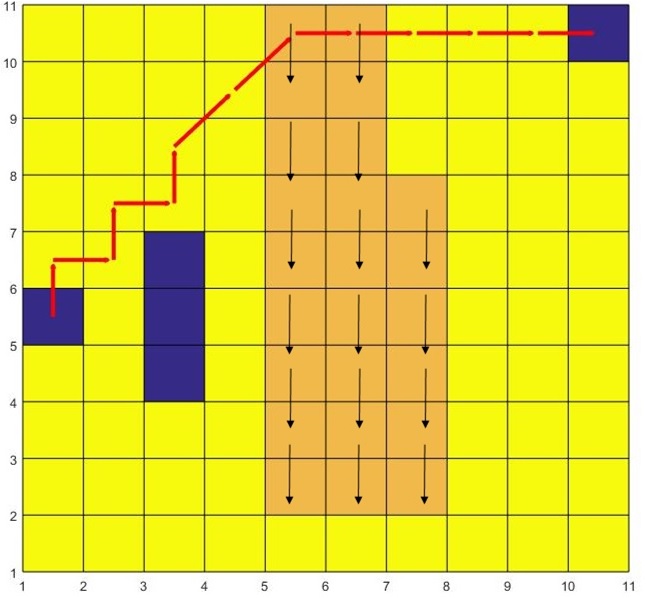}
        \caption{}
        \label{fig:tlwithmodellearning}
    \end{subfigure}
    \begin{subfigure}{0.5\columnwidth}
        \includegraphics[width=\textwidth]{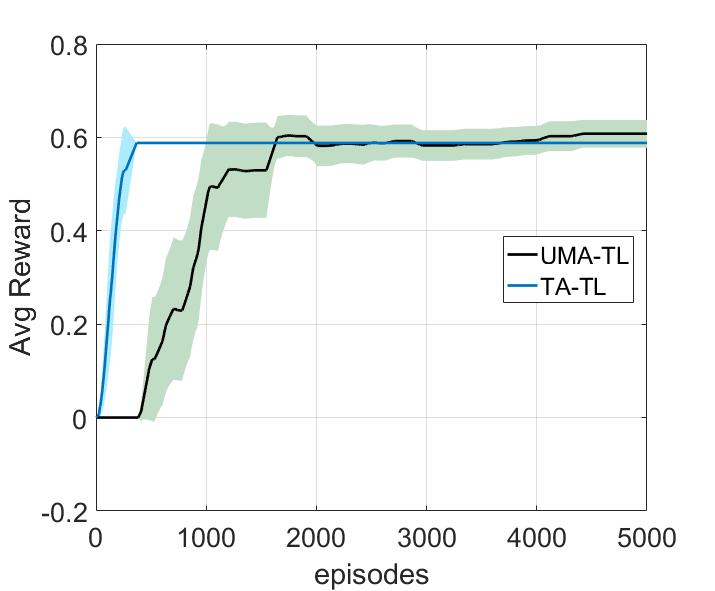}
        \caption{}
        \label{fig:avg_reward_Windy}
    \end{subfigure}
    \begin{subfigure}{0.5\columnwidth}
        \includegraphics[width=\textwidth]{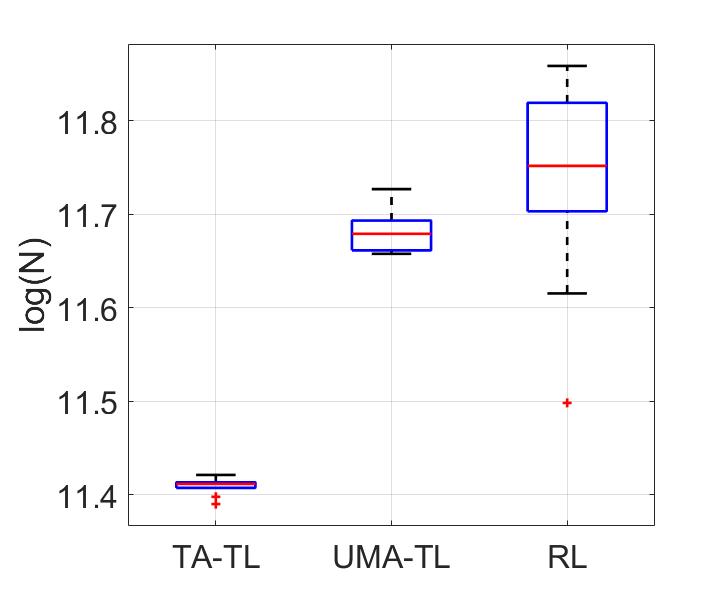}
        \vspace{-0.2in}
        \caption{}
        \label{fig:solutiontime}
    \end{subfigure}
    \caption{Non windy to Windy grid World Transfer:(a) \& (b) Agent navigating through grid world in source and target domain (c) Average Rewards \& (d) Training length, Comparing quality of transfer for TA-TL and UMA-TL through convergence rate of Average Reward and Training Length}
    \label{fig:grid_world}
\end{figure*}

\begin{figure*}[tbh]
    \centering
    \begin{subfigure}{0.5\columnwidth}
        \includegraphics[width=\textwidth]{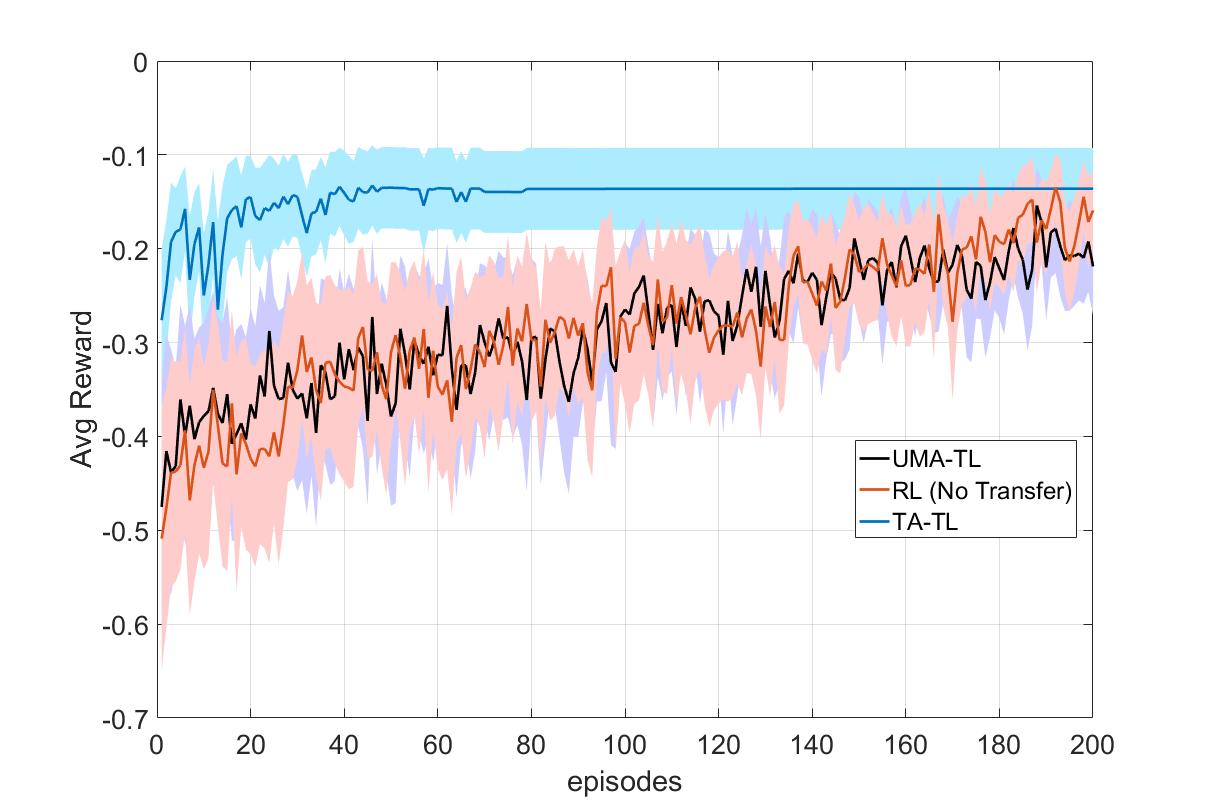}
        \caption{}
        \label{fig:avg_reward_IP}
    \end{subfigure}
    \begin{subfigure}{0.5\columnwidth}
        \includegraphics[width=\textwidth]{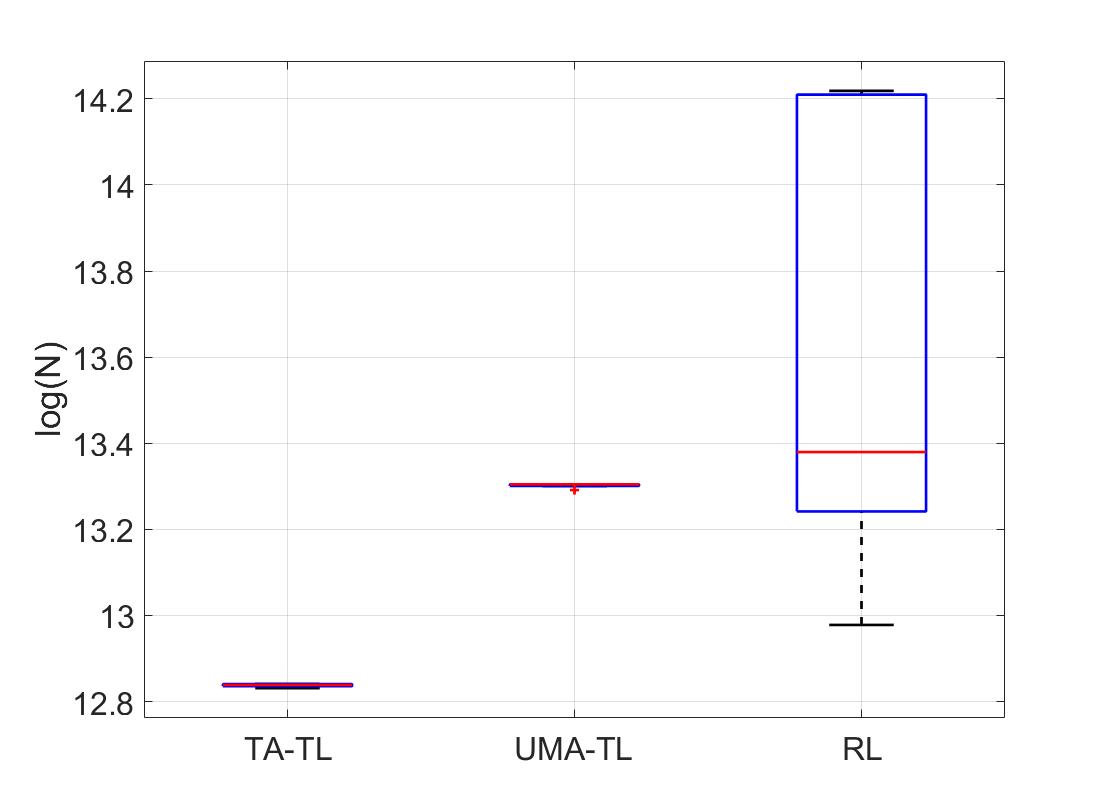}
        \caption{}
        \label{fig:solutiontime_IP}
    \end{subfigure}
    \begin{subfigure}{0.5\columnwidth}
            \includegraphics[width=\textwidth]{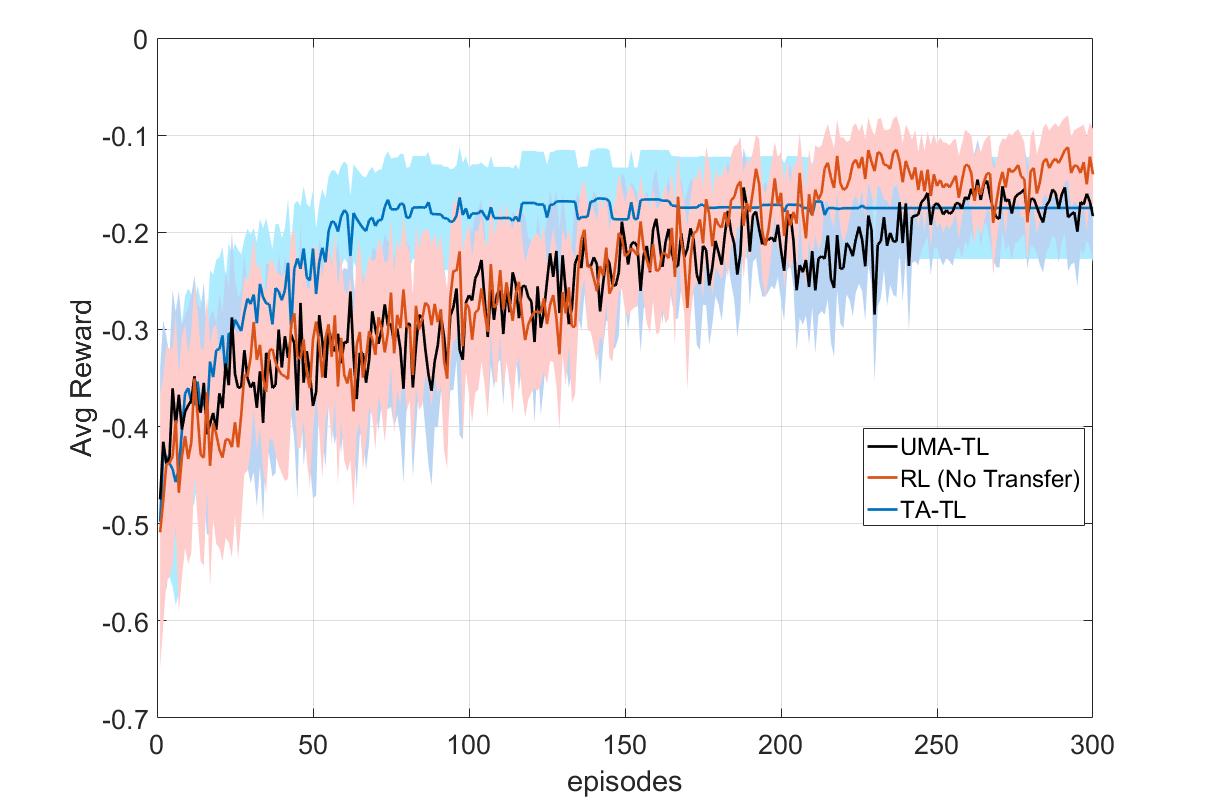}
            \caption{}
            \label{fig:avg_reward_MC}
    \end{subfigure}
    \begin{subfigure}{0.5\columnwidth}
            \includegraphics[width=\textwidth]{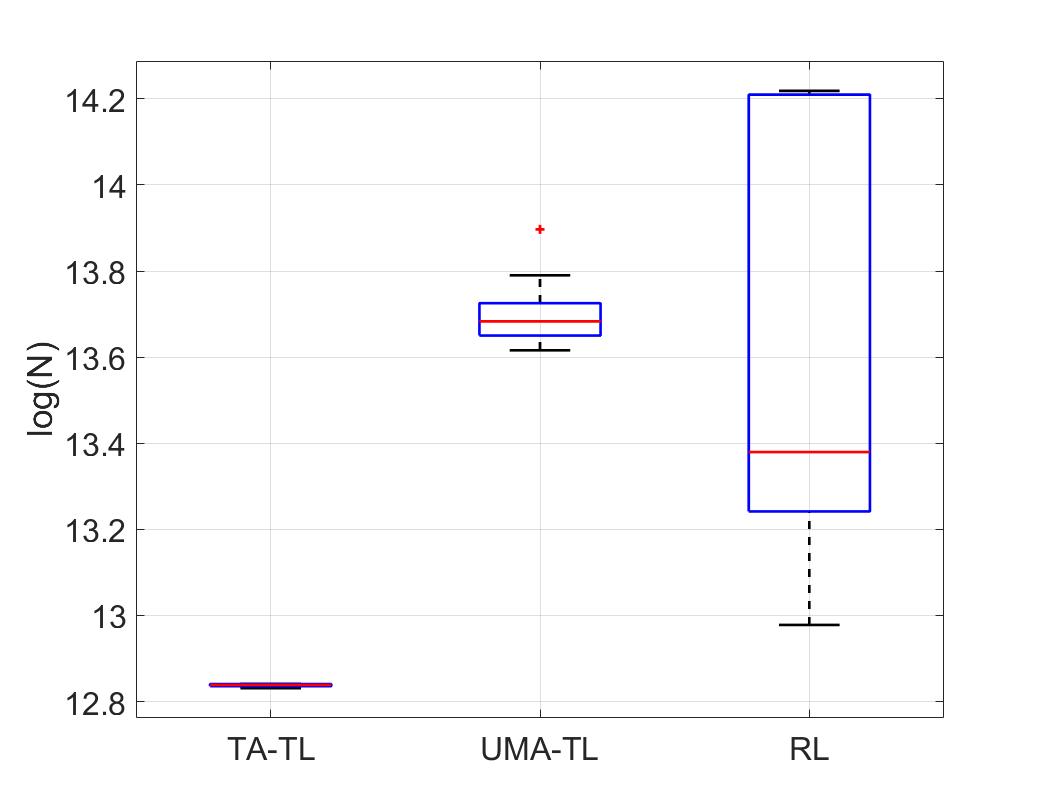}
            \caption{}
            \label{fig:solutiontime_MC}
    \end{subfigure}
    \caption{IP to Non-stationary IP Transfer: (a) Average Rewards and (b) Training length, MC to IP Transfer: (c) Average Rewards and (d) Training length}
    \label{fig:IP_IP}
\end{figure*}

\begin{figure*}[tbh]
    \centering
    \begin{subfigure}{0.5\columnwidth}
        \includegraphics[width=\textwidth]{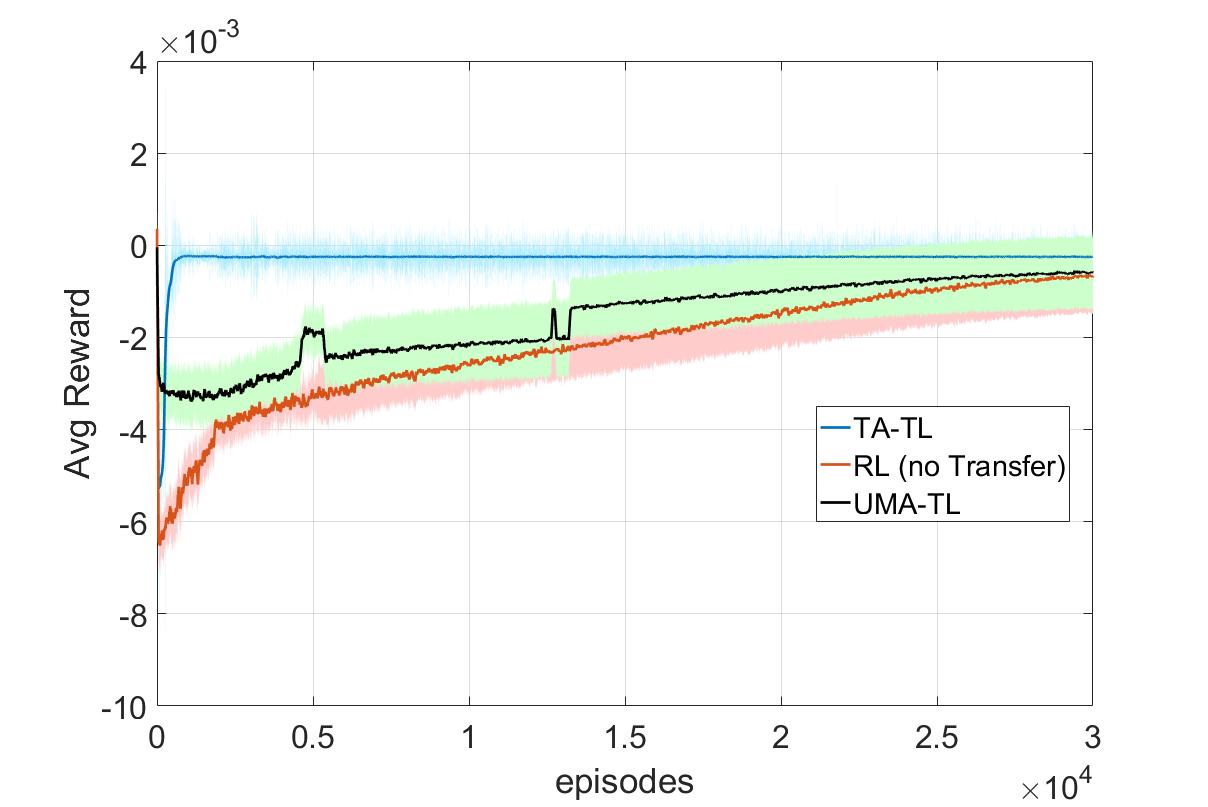}
        \caption{}
        \label{fig:avg_reward_Bike}
    \end{subfigure}
    \begin{subfigure}{0.5\columnwidth}
        \includegraphics[width=\textwidth]{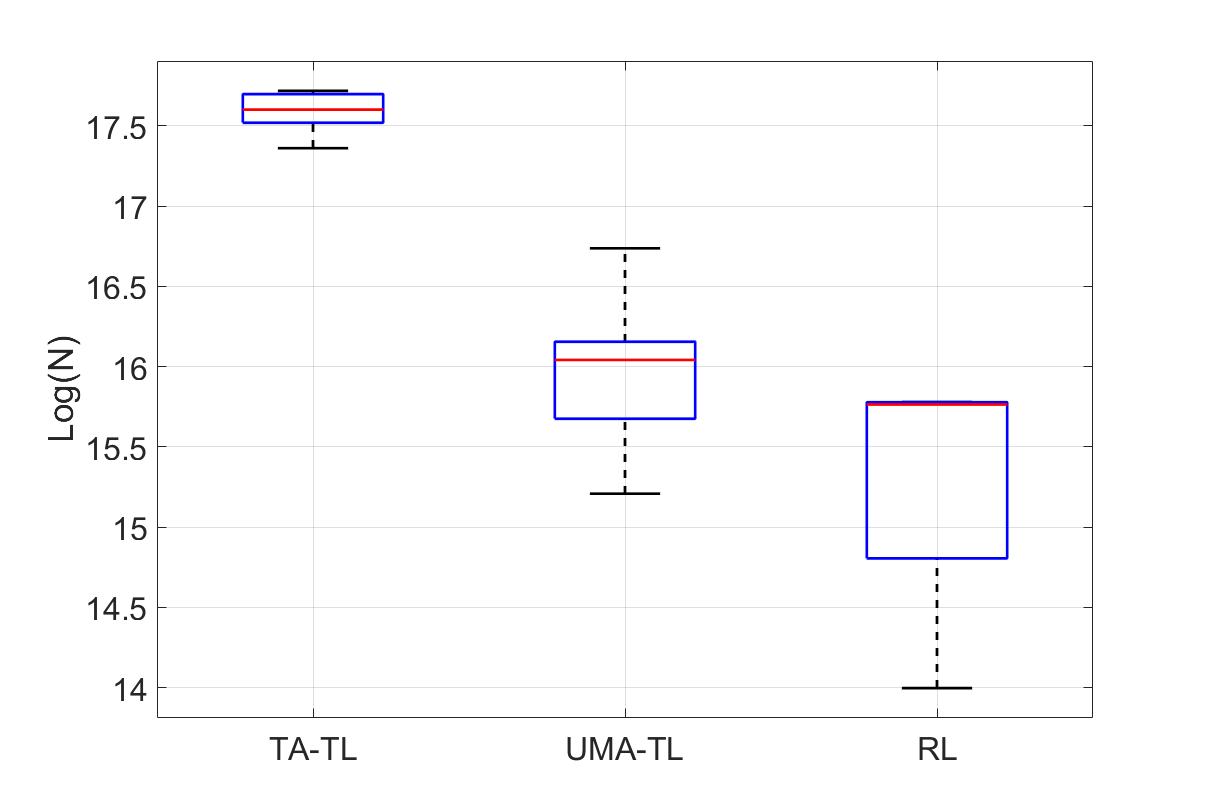}
        \caption{}
        \label{fig:solutiontime_Bike}
    \end{subfigure}
    \begin{subfigure}{0.5\columnwidth}
        \includegraphics[width=\textwidth]{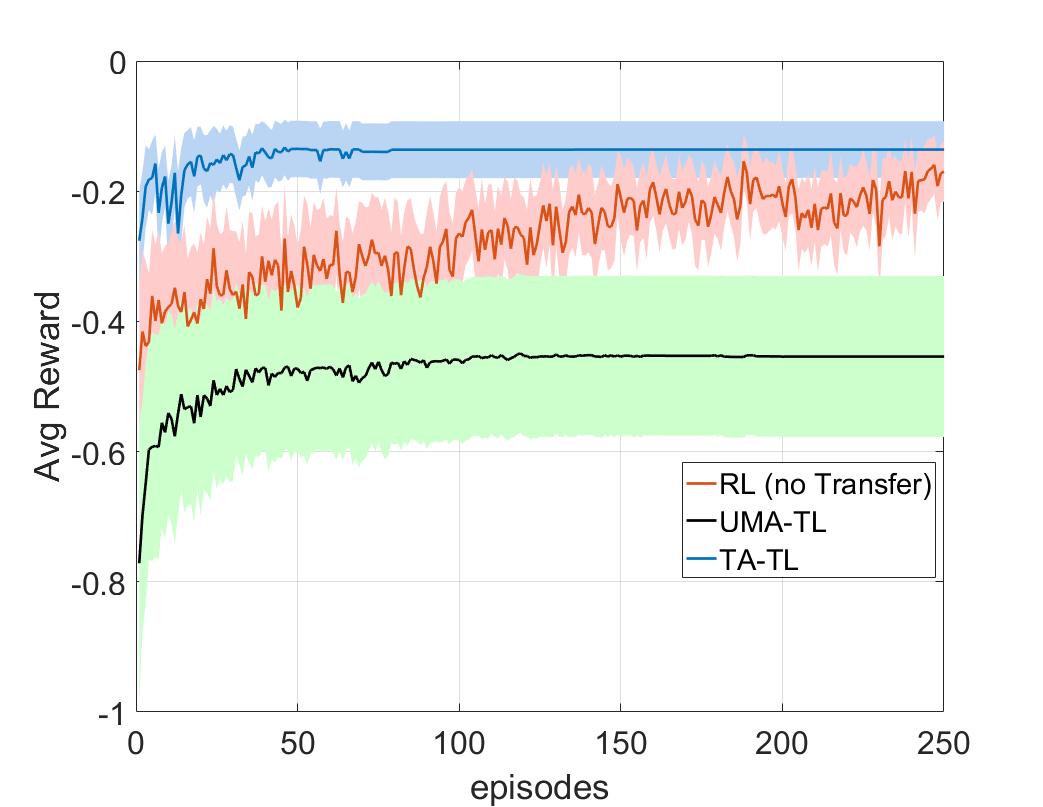}
        \caption{}
        \label{fig:avg_reward_NT}
    \end{subfigure}
    \begin{subfigure}{0.5\columnwidth}
        \includegraphics[width=\textwidth]{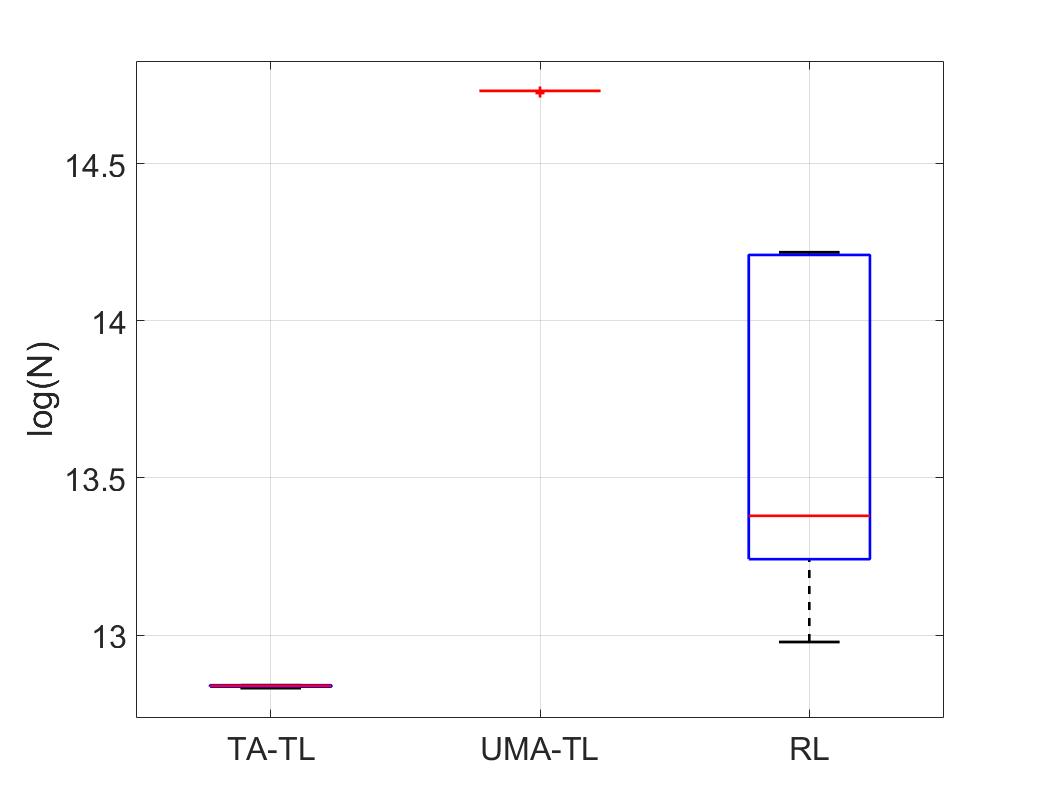}
        \caption{}
        \label{fig:solutiontime_NT}
    \end{subfigure}
    \caption{Transfer Cart-Pole to Bike Balancing task: (a) Average Rewards and (b) Total simulation time (seconds) the agent can balance the bike, Negative Transfer Inverted Pendulum: (c) Average Reward and (d) Training length}
    \label{fig:IP_IP}
\end{figure*}

\subsubsection{Inverted Pendulum (IP) to time-varying IP}
We demonstrate our approach for a continuous state domain, Inverted Pendulum (IP) swing-up and balance. The source task is the conventional IP domain \cite{sutton1998reinforcement}. The target task is a non-stationary inverted pendulum, where the length and mass of the pendulum are continuously time varying with function $L_i = L_0 + 0.5cos(\frac{\pi i}{50})$ and $M_i = M_0 + 0.5cos(\frac{\pi i}{50})$, where $L_0 = 1$, $M_0 = 1$ and $i = 1 \ldots N$. The state variables describing the system are angle and angular velocity $\{\theta, \dot \theta\}$ where $\theta, \dot \theta \in [-\pi,\pi]$. The RL objective is to swing-up and balance the pendulum upright such that $\theta = 0, \dot \theta = 0$.  The reward function is selected as $r(\theta,\dot \theta) = -10|\theta|^2 - 5|\dot \theta|^2$, which yields maximum value at upright position and minimum at the down-most position. The action space is: full throttle right $(+\tau)$, full throttle left $(-\tau)$ and zero throttle. Note that the domain is tricky, since full throttle action is assumed to not generate enough torque to be able to lift the pendulum to the upright position, hence, the agent must learn to swing the pendulum so that it generates oscillations and leverages angular momentum to go to the upright position. The target task differs from the source task in the transition model.

The source task use FQI learning with single layer Radial Basis Functions (RBF) network. The Q function is modeled as linear combination of weights and basis as $Q = w'\xi(s)$. We use RBF bases ``$\xi(s)$'' for value function approximation with bandwidth $\sigma = 1.2I$ and $20$ centers spanning space $x,\dot x \in [-\pi, \pi]$ with network learning rate of $\Gamma = 5 \times 10^{-3}$ for FQI iterations. 

Figure \ref{fig:avg_reward_IP} and \ref{fig:solutiontime_IP} shows the quality of transfer through faster convergence to average maximum reward with lesser training samples for proposed TA-TL method compared to UMA-TL and RL methods.
\subsection{Cross-Domain Transfer}
Next, we consider an even more challenging transfer setting: cross-domain transfer. The problem setup is similar to same domain transfer with the notable distinction being the state spaces are different for the source and target tasks. 

\subsubsection{Cart-Pole to Bicycle}
Our main result is the task where an agent learns to ride the bicycle. We consider the problem of learning to balance; we do not concentrate on the navigation problem to some goal position. Since RL for navigation is more of trajectory optimization problem, i.e., the agent learn to focus on maneuvering towards the target once it has learned to balance the bicycle upright. Balancing is a more interesting problem when the bicycle is below critical velocity. Usually critically velocity $V_c$, above which bicycle is self stabilizing is approximately $V_c = 4m/s$ to $5m/s$. We are trying to learn to balance a unstable bicycle with forward velocity $V < V_c$ i.e. $V = 2.778m/s$. We also simulate the imperfect balance by inducing random noise in the CG displacement of rider, by up to $2cm$ from zero position.

At every time step agent receives information about the state of the bicycle, the angle and angular velocity of the handlebar and the bike from vertical $(\theta, \dot \theta, \omega, \dot \omega)$ respectively. For the given state the agent is in, it chooses an action of applying torque to handlebar, $T \in [-2Nm,0,2Nm]$ trying to keep bike upright. The details of bicycle dynamics are beyond the scope of this paper, interested readers are referred to \cite{randlov1998learning,aastrom2005bicycle} and references therein.

We use the Cart-Pole as source task for learning to balance bicycle. The bicycle balance problem is not so different from the cart pole: in both the cases, the objective is to keep the unstable system upright. The objective of balance is achieved in both the systems by moving in the direction of fall. However, the control in the cart pole affects more directly the angle of the pole, i.e., move the cart such that it is always under the pole. In the bicycle, the control is to move the handlebar in the direction of fall. However, balancing the bike is not so simple, to turn the bike under itself, one must first steer in the other direction, this is called counter steering \cite{aastrom2005bicycle}. We observe that both cart pole and bicycle has this commonality in dynamical behavior, as both the system have a non-minimum phase that is the presence of unstable zero. They tend to move initially in the direction opposite to the applied control. This similarity qualifies the cart-pole system as an appropriate choice of source model for bicycle balance task.

Cart pole is characterized by state vector $[x, \dot x, \theta, \dot \theta]$, i.e., position, the velocity of cart and angle, angular velocity of the pendulum. The action space is the force applied to cart $F \in [-20N,0,20N]$. Cross-domain transfer requires a correspondence between inter-task space manifold for mapping the learned policy and source transition model from source to target space and back. We use UMA to discover the correspondence between state space of bicycle and cart pole model. We use FQI to solve for optimal policy in the source, cart pole model. A linear network with cart pole states $\xi(s) = [x, \dot x, \theta, \dot \theta]$ is used as basis vector to approximate the action value function, with network learning rate $\Gamma = 1 \times 10^{-4}$.

Figure \ref{fig:avg_reward_Bike} shows the average reward accumulated by TA-TL, UMA-TL and RL (no transfer) in learning to balance the bicycle. In the typical learning process, TA-TL out performs other transfer method and converges to maximum average reward in 1700 episodes. Each episode is a simulation run until the policy can balance bicycle without toppling; the episode ends if the bicycle falls or max time of 1000s is reached. Figure \ref{fig:solutiontime_Bike} shows total time the bicycle was balanced upright by each method. The bike balancing time for TA-TL methods is highest and is $\approx 40$ times more compared to UMA-TL method.

\subsubsection{Mountain Car (MC) to Inverted Pendulum (IP)}
 We have tested the cross-domain transfer between mountain car to an inverted pendulum. The source and target task are characterized by different state and action space. The source task MC is a benchmark RL problem of driving an under-powered car up a hill. The dynamics of MC are described by two continues state variables $(x,\dot x)$ where $x \in [-1.2 , 0.6]$ and $\dot x \in [-0.07 , 0.07]$. The input action takes three distinct values $(+1)$ full throttle right, $(-1)$ full throttle left and $(0)$ no throttle. The reward function is proportional to negative of the square distance of the car from goal position. The target task is conventional IP as described in the previous experiment. 
 
We utilize UMA to obtain this mapping as described in Section \ref{MA}. We do not report the training time to learn the intertask mapping since it is common to both TA-TL and UMA-TL methods. We used a random policy to generate samples for manifold alignment and for target apprentice learning. The source task uses FQI learning with single layer RBF network for optimal policy generation. The source Q-function is modeled using function approximation as $Q = w'\xi(s)$ using RBF as bases ``$\xi(s)$'' with bandwidth $\sigma = diag[0.3, 0.1]$ and $20$ centers spanning space $ x \in [-1.2, 0.6]$ and $ \dot x \in [-0.07, 0.07]$ with network learning rate of $\Gamma = 0.15 \times 10^{-3}$. 
For all above results the training length involved with TA-TL method in Figure \ref{fig:solutiontime_MC}, \ref{fig:solutiontime_IP}, and \ref{fig:solutiontime} is sample lengths for target apprentice learning. We compare TA-TL with UMA-TL and generic-RL on target task. We examine the efficiency and effectiveness of transfer methods based on sample efficiency in learning the target task and speed of convergence to maximum average reward. Similar to same domain transfer Figure \ref{fig:avg_reward_MC} and \ref{fig:solutiontime_MC} shows the quality of transfer for TA-TL through faster convergence to average maximum reward with lesser training samples compared to UMA-TL and RL methods.

\subsection{Negative transfer}
In our last result, we demonstrate that the proposed transfer is robust to negative transfers. Given a target model, the effectiveness of transfer depends on the relevance of the source task to the target task. If the relationship is strong, the transfer method can take advantage of it, significantly improving the performance of the target task through transfer. However, if the source and target are not sufficiently related or the features of source task do not correspond to the target, the transfer may not improve or even decrease the performance in target task leading to negative transfer.

We show that the UMA-TL suffers from a negative transfer in this results,  where as the performance of presented TA-TL is much superior compared to UMA-TL and RL(no transfer). We demonstrate this through an inverted pendulum upright balance task. We use inverted pendulum model as both source and target systems. The target is different from source model in the sign of the control action. With exactly same dynamics in both source and the target model, but with the sign flipped of the control effective term in the target, we observed that an initialized target task learning (UMA-TL) suffers with negative transfer. RL is indifferent to sign change as it learns policy from scratch. Whereas for the TA-TL method, since we learn the apprentice model to the target, we learn the sign associated with action as well. Thereby the policy modification term sign is flipped accordingly, and same policy transfer performance is achieved irrespective of the control sign change.

Figure \ref{fig:avg_reward_NT} and \ref{fig:solutiontime_NT} shows the quality of transfer through faster convergence to average maximum reward with lesser training samples for proposed TA-TL method compared to UMA-TL and RL methods. It is to be observed that UMA-TL method converges to much lower average reward and gets stuck in a local minima and never achieves the upright balance of pendulum. Also, the samples observed by UMA-TL in learning the task is much higher compared to no transfer (RL) and proposed TA-TL methods. 
\section{Conclusions}
We introduced a new Transfer Learning technique in RL, which leads to sample efficient transfer between source and target tasks. The presented approach demonstrates the near-optimality of the transferred policy in target domain by augmenting it with an adaptive policy; which accounts for the model error between target and projected source. The sample complexity of the transfer is reduced to target apprentice learning, which we demonstrated empirically, leads to more than one order improvement in training lengths over existing approaches.


\bibliographystyle{unsrt}
\bibliography{TL_references}

\end{document}